\newtheorem{theorem}{Theorem}
\newtheorem{corollary}{Corollary}[theorem]
\newtheorem{lemma}{Lemma}
\newtheorem{conj}{Conjecture}
\newcommand\numberthis{\addtocounter{equation}{1}\tag{\theequation}}
\title{Convergence Analysis of Gradient Descent Algorithms with Proportional Updates}
\author{
  Igor Gitman (igitman) \\
  \texttt{igitman@andrew.cmu.edu} \\
  \And
  Deepak Dilipkumar (ddilipku) \\
  \texttt{ddilipku@andrew.cmu.edu} \\
  \And
  Ben Parr (bparr) \\
  \texttt{bparr@cmu.edu} \\
}
\newcommand{\R}{\mathbb{R}}
\newcommand{\inner}[2]{\left\langle#1,#2 \right\rangle}
\newcommand{\rbr}[1]{\left(#1\right)}
\newcommand{\sbr}[1]{\left[#1\right]}
\newcommand{\cbr}[1]{\left\{#1\right\}}
\newcommand{\nbr}[1]{\left\|#1\right\|}
\newcommand{\abr}[1]{\left|#1\right|}
\DeclareMathOperator*{\argmin}{arg\,min}
\begin{document}

\maketitle


\section{Introduction}

The rise of deep learning in recent years has brought with it increasingly clever optimization methods to deal with complex, non-linear loss functions~\cite{ruder}. These methods are often designed with convex optimization in mind, but have been shown to work well in practice even for the highly non-convex optimization associated with neural networks. However, one significant drawback of these methods when they are applied to deep learning is that the magnitude of the update step is sometimes disproportionate to the magnitude of the weights (much smaller or larger), leading to training instabilities such as vanishing and exploding gradients~\cite{pascanu2013difficulty}. An idea to combat this issue is gradient descent with proportional updates.

Gradient descent with proportional updates was introduced in 2017. It was independently developed by You et al~\cite{you2017large} (Layer-wise Adaptive Rate Scaling (LARS) algorithm) and by Abu-El-Haija~\cite{abu2017proportionate} (PercentDelta algorithm). The basic idea of both of these algorithms is to make each step of the gradient descent proportional to the current weight norm and independent of the gradient magnitude:
\begin{equation}\label{eq:propupdates}
w_{k+1} = w_k - \lambda_k \frac{\nbr{w_k}}{\nbr{\nabla L(w_k)}}\nabla L(w_k) 
\end{equation}

where $L(w)$ is the objective function being optimized. $\nbr{.}$ is $L_2$ norm for LARS and $L_1$ norm for PercentDelta, and both of the algorithms apply this update independently for each layer. As mentioned before, this attempts to alleviate the fact that the magnitude of the gradients is not a reliable step size indicator by replacing it with the magnitude of the current weight vector, leading to proportional updates.

It is common in the context of new optimization methods to prove convergence or derive regret bounds under the assumption of Lipschitz continuity and convexity~\cite{proofs}. However, even though LARS and PercentDelta were shown to work well in practice, there is no theoretical analysis of the convergence properties of these algorithms. Thus it is not clear if the idea of gradient descent with proportional updates is used in the optimal way, or if it could be improved by using a different norm or specific learning rate schedule, for example. Moreover, it is not clear if these algorithms can be extended to other problems, besides neural networks.

In this project we attempt to answer these questions by establishing the theoretical analysis of gradient descent with proportional updates, and verifying this analysis with empirical examples.


\section{Related Work}

The flurry of activity in deep learning in recent years was kicked off by the results presented in Krizhevsky et al \cite{alexnet}. Since then, most research done to improve fundamental deep learning techniques has typically focused on two areas - changes to the network architecture \cite{zfnet}\cite{inceptionnet}\cite{resnet}\cite{densenet},  and changes to the training mechanism. The latter includes improved regularization \cite{dropout}, initialization schemes \cite{glorot} and optimization techniques, which are known to play a significant role in the performance of deep networks in practice \cite{impmomentum}.
\\
\\
While AlexNet achieved excellent results using Stochastic Gradient Descent with momentum, SGD is known to face difficulties when optimizing certain types of problems \cite{longtermdifficult}. It is also known to perform poorly with badly tuned learning rates or decay schedules \cite{peskylr}. A number of gradient-based optimization methods have been proposed to remedy this by using adaptive learning rates. These include RMSProp~\cite{tieleman2012lecture}, AdaGrad~\cite{adagrad}, AdaDelta~\cite{adadelta} and ADAM~\cite{adam}. Although these techniques have interesting theoretical properties and often improve training stability for deep architectures, in practice the most popular optimization method remains SGD with momentum, as it tends to outperform these other methods in terms of accuracy given enough training epochs.
\\
\\
As explained earlier, the papers most relevant for us are \cite{you2017large} and \cite{abu2017proportionate} which introduced the idea of gradient descent with proportional updates. These methods have an additional advantage over traditional adaptive learning rate methods like those mentioned above in that the learning rate is computed per layer instead of per weight, leading to improved stability.

We now formalize and test the proportional update method LARS, which assumes that $L_2$ norm is used in the update rule~\ref{eq:propupdates}\footnote{Most of the presented results can be extended to the $L_1$ norm case using the equivalence of the norms in finite-dimensional spaces.}. We start by analyzing functions operating in 1-dimensional space, since the convergence analysis is greatly simplified in that case. Using the insights found from the 1D derivations we present the convergence analysis of LARS for general functions $f: \R^n \to \R$ in section~\ref{sec:general-analysis}.


\section{Convergence analysis of 1D functions}\label{sec:1d-analysis}
\subsection{Simple quadratic in 1D}
To gain an intuition into the performance of LARS algorithm, we will start by analyzing the simplest quadratic case in 1-dimensional space: $f(w) = \frac{1}{2}(w - a)^2$ where $w, a \in \R$. In this analysis we assume that $\eta$ is constant throughout iterations, yielding the following update rule:
\[ w_{k+1} = w_k - \eta \frac{\abr{w_k}}{\abr{\nabla f(w_k)}}\nabla f(w_k) = w_k - \eta\abr{w_k}\text{sign}(w_k - a)  \]

Assuming $a > 0$ and $w_0 > 0$\footnote{The same analysis can be done for the case when $a < 0$ and $w_0 < 0$. However, when the optimal and initial points have different signs, LARS will fail to converge, which is discussed in the next section.} we can say that $w_k > 0\ \forall k \Rightarrow \abr{w_k} = w_k$. Then, let's notice that if 
$w_k \in [a(1 - \eta); a(1 + \eta)]$ then $w_t \in [a(1 - \eta); a(1 + \eta)]\ \forall\ t > k$. Indeed if 
\[ w_k \in \left[a(1 - \eta); a\right) \Rightarrow a(1 - \eta) \le w_k <  w_k + \eta w_k = w_{k+1} = (1+\eta)w_k < (1 + \eta)a \]
The same arguments hold when $w_k \in (a, a(1 + \eta)]$. If $w_k = a \Rightarrow w_{k+1} = a$, since $\text{sign}(w_k - a) = 0$. Applying this inductively for all $t > k$ we conclude the proof.

Assuming $w_0 < a$ we can calculate how many iterations are needed to reach the $[a(1 - \eta); a(1 + \eta)]$ interval. Indeed, for all $k$, such that $w_k < a$ we get
\begin{align*} 
    w_k = (1 + \eta)w_{k-1} = \cdots = (1 + \eta)^kw_0 &\ge a(1 - \eta) \Leftrightarrow k\log\rbr{1 + \eta} \ge \log\rbr{\frac{a(1 - \eta)}{w_0}} \Leftrightarrow \\ \Leftrightarrow\ k &\ge \log\rbr{\frac{a(1 - \eta)}{w_0}} \Bigg/ \log\rbr{1 + \eta} \approx 
    \frac{1}{\eta}\log\rbr{\frac{a}{w_0}}
\end{align*}

Moreover, $w_k$ can't skip the $[a(1 - \eta); a(1 + \eta)]$ interval, since if 
\[ w_k < a(1 - \eta) < a \Rightarrow w_{k + 1} = (1 + \eta)w_k < a(1+\eta) \]
Similar analysis can be carried out for the $w_0 > a$ case to obtain:
\begin{align*} 
    k \ge -\log\rbr{\frac{a(1 + \eta)}{w_0}} \Bigg/ \log\rbr{1 - \eta} \approx \frac{1}{\eta}\log\rbr{\frac{a}{w_0}}
\end{align*}

Thus we've shown that $w_k$ will reach the desired interval with $\approx O\rbr{\frac{1}{\eta}}$ number of steps and will stay in that interval ever after. However, it can get exactly to the optimum from only 2 specific points and if it doesn't reach the optimum exactly, it will keep oscillating in that interval without any convergence. Thus in order to reach $\epsilon$-convergence we need to make this interval $\epsilon$-small:
\[ \max\cbr{a - a(1 - \eta), a(1 + \eta) - a} = \max\cbr{\eta a, \eta a} = \eta a < \epsilon \Rightarrow \eta < \frac{\epsilon}{a} \]

So, in this simple case, LARS algorithm has sublinear convergence speed $O\rbr{\frac{a}{\epsilon}}$, similar to the usual gradient descent for convex functions with Lipschitz gradient. However, in this case there is an undesired linear dependence on the norm of the solution.
\subsection{Origin as attractive fixed point}\label{why_implementation_is_sgd_sometimes}
The analysis in the previous section was carried out with the assumption that $a$ and $w_0$ have the same signs. If that is not true, it is easy to see that LARS algorithm will fail to converge. Indeed, assuming without loss of generality that $w_0 < 0$ and $a > 0$, we have
\[ 
w_{k+1} = w_k - \eta\abr{w_k}\text{sign}(w_k - a) = w_k - \eta w_k = (1 - \eta)w_k \xrightarrow[k \to \infty]{} 0
\]

Which means that the origin is a fixed attractive point of the iterates $w_k$ for this situation. Intuitively, it is not surprising that LARS iterates will never change sign, since the next point is proportional to the current point with coefficient $\eta < 1$, making $w_k$ smaller and smaller when moving in the direction of origin.

You et al~\cite{you2017large} overcome this problem by switching to the usual stochastic gradient descent in the vicinity of the origin (i.e. when $\nbr{w_k}_2 < \beta$ for some predefined $\beta$). For now we follow the same strategy in our experiments, although modifying LARS to avoid this problem in a more natural way is a direction of future research.

\subsection{General function in 1D}\label{sec:1d-general}
Somewhat surprisingly the analysis presented in the previous two sections can be directly applied to any differentiable convex function $f : \R \to \R$. Indeed, the LARS updates only depend on the sign of the gradient, thus, for any function with $a = \argmin_w f(w)$ we will get the same update equation
\[ w_{k+1} = w_k - \eta \frac{\abr{w_k}}{\abr{f'(w_k)}}f'(w_k) = w_k - \eta\abr{w_k}\text{sign}\rbr{f'(w)} = w_k - \eta\abr{w_k}\text{sign}(w_k - a) \]
since for any convex differentiable function the following property holds
\[ \rbr{f'(x) - f'(y)}\rbr{x - y} \ge 0\ \forall x, y \Rightarrow f'(w)\rbr{w - a} \ge 0 \Rightarrow \text{sign}\rbr{f'(w)} = \text{sign}(w - a) \] 

This suggests that for the general function $f: \R^n \to \R$ the convergence analysis can be carried out without assuming anything except convexity and differentiability. In particular, it suggests that the analysis presented in the next sections can be improved by relaxing the Lipschitz gradient assumption.
\section{General convex function with Lipschitz gradient}\label{sec:general-analysis}
In this section we will present the convergence analysis for the general convex functions with Lipschitz gradient. As we have seen from the 1-dimensional case, LARS algorithm with fixed learning rate can only converge to a region around the optimal point, which depends on the norm of the optimal solution $w^*$ and learning rate $\eta$. We will aim to establish a similar result for the general convex differentiable function with Lipschitz gradient. We will first prove that LARS updates never diverge to infinity, given that the learning rate is set to be sufficiently small. Using that result we will show that the size of the convergence region becomes smaller with the decrease of $\eta$ and thus $\epsilon$-convergence can be guaranteed by setting $\eta$ to be small enough. Finally, we will show that LARS algorithm has at least sublinear convergence rate $O(1/\epsilon^{1.5})$ with fixed learning rate and that it converges under the standard assumptions on the learning rate decay schedule. It might be possible to improve the established convergence rate by using more precise lower bounds.

\subsection{Basic convergence proof}
In this section we will show that given a sufficiently small learning rate, LARS iterates never diverge to infinity, meaning that LARS always converges to some region around the optimal point. This statement is very intuitive, since in order for LARS to diverge to infinity, it has to either follow mostly the same direction for infinite number of iterations, or alternate directions, but have infinitely increasing step sizes. The first one is not possible, since the update direction always has acute angle with the direction towards optimum. Thus, after LARS ``passes'' the optimum it will have to ``reverse'' its direction. The step sizes should stay finite since the norm of the next point is proportional to the norm of the current point and cannot be consistently increasing given that LARS periodically has to ``reverse'' the update direction.

Even though this statement is intuitively clear, we were not able to explicitly construct the convergence region as was possible in the 1-dimensional case. Instead, we conducted a non-constructive proof by showing that the set of points, for which the next LARS step is farther away from the optimum than the current step, is bounded. Formally, we proved the following Lemma:

\begin{lemma}\label{lm:set}
    For any strongly convex function $f(w)$ with Lipschitz gradient, the following set 
    \[ S = \cbr{w: \nbr{w - \eta\frac{\nbr{w}_2}{\nbr{\nabla f(w)}_2}\nabla f(w) - w^*}_2 > \nbr{w - w^*}_2} \]
    is bounded, i.e. $\exists M < \infty: \sup_{w \in S}\nbr{w}_2 \le M$, providing $\eta$ is small enough, $w^* = \argmin_w f(w)$
\end{lemma}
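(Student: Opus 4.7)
The plan is to turn membership in $S$ into a scalar inequality and then use strong convexity and the Lipschitz-gradient hypothesis to force $\nbr{w}_2$ to stay inside a ball around $w^*$ whose radius is controlled by $\eta$.

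First, I would expand the squared distance. Writing $r=\nbr{w}_2$, $g=\nabla f(w)$, $G=\nbr{g}_2$, and $d=w-w^*$, one has
\[
\nbr{w - \eta\tfrac{r}{G}g - w^*}_2^2 \;=\; \nbr{d}_2^2 \;-\; 2\eta\tfrac{r}{G}\inner{g}{d} \;+\; \eta^2 r^2,
\]
so the $\nbr{d}_2^2$ terms cancel on subtracting $\nbr{w-w^*}_2^2$, and (after dividing by the positive quantity $\eta r/G$) $w\in S$ reduces to the clean scalar condition
\[
\eta\,\nbr{w}_2\,\nbr{\nabla f(w)}_2 \;>\; 2\inner{\nabla f(w)}{w-w^*}.
\]
A bonus feature here is that the normalization in the LARS update is exactly what makes the $\eta^2$ term become $\eta^2 r^2$ rather than something that depends on $\nbr{g}_2$.

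Next, I would squeeze this condition using both structural hypotheses simultaneously. Strong convexity with parameter $\mu$ gives $\inner{\nabla f(w)}{w-w^*}\ge \mu\nbr{w-w^*}_2^2$, while $L$-Lipschitzness of $\nabla f$ together with $\nabla f(w^*)=0$ gives $\nbr{\nabla f(w)}_2 \le L\nbr{w-w^*}_2$. Chaining these two bounds into the $S$-condition yields, for every $w\in S$,
\[
\eta L\,\nbr{w}_2\,\nbr{w-w^*}_2 \;\ge\; \eta\,\nbr{w}_2\,\nbr{\nabla f(w)}_2 \;>\; 2\mu\,\nbr{w-w^*}_2^2,
\]
and dividing by $\nbr{w-w^*}_2$ gives the key linear bound $\nbr{w-w^*}_2 < \tfrac{\eta L}{2\mu}\nbr{w}_2$.

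Finally, I would close the loop with the triangle inequality $\nbr{w}_2 \le \nbr{w-w^*}_2 + \nbr{w^*}_2$, plug in the previous bound, and solve for $\nbr{w}_2$. Provided $\eta$ is small enough that $\eta L < 2\mu$ (this is the concrete meaning of ``$\eta$ small enough''), the factor $1-\tfrac{\eta L}{2\mu}$ is strictly positive and one obtains the explicit uniform bound
\[
\sup_{w\in S}\nbr{w}_2 \;\le\; M \;:=\; \frac{2\mu}{2\mu - \eta L}\,\nbr{w^*}_2 \;<\; \infty.
\]
The only real subtlety I anticipate is treating the degenerate points $w=0$, $w=w^*$, and $\nabla f(w)=0$ (the last two coincide for strongly convex $f$), since the very first step divides by $\nbr{w}_2\nbr{\nabla f(w)}_2$; each of these cases yields equality rather than strict inequality in the definition of $S$, so they are automatically excluded and the argument goes through. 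The main conceptual work, therefore, is entirely in lining up the strong convexity lower bound and the Lipschitz upper bound against the correct sides of the $S$-inequality — once that alignment is right, the bound on $M$ is essentially forced.
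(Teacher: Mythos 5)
Your proof is correct, and it follows the same skeleton as the paper's: expand the squared distance so the $\nbr{w-w^*}_2^2$ terms cancel, reduce membership in $S$ to the scalar inequality $\eta\nbr{w}_2\nbr{\nabla f(w)}_2 > 2\inner{\nabla f(w)}{w-w^*}$, and then play the strong-convexity lower bound $\inner{\nabla f(w)}{w-w^*}\ge \mu\nbr{w-w^*}_2^2$ against the Lipschitz upper bound $\nbr{\nabla f(w)}_2\le L\nbr{w-w^*}_2$. Where you diverge is in the finishing move, and your version is the better one. The paper normalizes the condition into a bound on $\cos\alpha$ and then closes with a limit argument --- since $\nbr{w}_2/\nbr{w-w^*}_2\to 1$ as $\nbr{w}_2\to\infty$, there exists \emph{some} threshold $M_1$ beyond which the $S$-condition fails --- which is why the paper itself describes the proof as non-constructive. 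You instead keep the inequality $\nbr{w-w^*}_2 < \tfrac{\eta L}{2\mu}\nbr{w}_2$ un-normalized and close it with the triangle inequality, which yields the fully explicit radius $M = \tfrac{2\mu}{2\mu-\eta L}\nbr{w^*}_2$ under the concrete condition $\eta L < 2\mu$ (the paper's condition $\eta\le m/L$ is a comparable but different sufficient threshold). An explicit $M$ in terms of $\mu$, $L$, $\eta$, and $\nbr{w^*}_2$ would actually propagate usefully into the constants of Theorems~\ref{th:main-conv} and~\ref{th:rate-fixed}, which currently inherit the unquantified $M_1$. One small imprecision: at $w=w^*$ the LARS displacement is a $0/0$ expression rather than an equality case of the $S$-condition, so you should either exclude $w^*$ from $S$ by convention or note that the update is defined to be zero there; this does not affect the boundedness conclusion.
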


The only reason why we need strong convexity is to show that the direction towards the optimum and the current update direction cannot become orthogonal when $\nbr{w_k}_2 \to \infty$. It seems intuitive that some similar results should hold for the general case, however we were not able to prove it directly. We state the following conjecture

\begin{conj}\label{conj}
    For any convex function $f(w)$ with Lipschitz gradient
    \[ \exists\ C > 0: \cos{\alpha} \equiv \frac{\inner{w - w^*}{\nabla f(w) -\nabla f(w^*) }}{\nbr{w - w^*}_2\nbr{\nabla f(w) - \nabla f(w^*)}_2} = \frac{\inner{w - w^*}{\nabla f(w)}}{\nbr{w - w^*}_2\nbr{\nabla f(w)}_2} \ge C \ \forall w \]
\end{conj}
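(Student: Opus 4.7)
The plan is to first reduce the conjecture, via the Baillon--Haddad co-coercivity theorem, to a statement about the scaling of $\nbr{\nabla f(w)}_2$ relative to $\nbr{w-w^*}_2$, and then split the analysis into a near-field and a far-field regime. For a convex $f$ with $L$-Lipschitz gradient, co-coercivity applied at $w$ and $w^*$ (using $\nabla f(w^*)=0$) gives
\[
\inner{w-w^*}{\nabla f(w)} \ge \frac{1}{L}\nbr{\nabla f(w)}_2^{2}.
\]
Dividing both sides by $\nbr{w-w^*}_2\nbr{\nabla f(w)}_2$ yields the clean reduction
\[
\cos\alpha \;\ge\; \frac{\nbr{\nabla f(w)}_2}{L\,\nbr{w-w^*}_2},
\]
so it suffices to lower bound $\rho(w):=\nbr{\nabla f(w)}_2/\nbr{w-w^*}_2$ by a positive constant for all $w\ne w^*$.

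For the near-field, Taylor-expanding around $w^*$ gives $\nabla f(w)=\nabla^{2}f(w^*)(w-w^*)+o(\nbr{w-w^*}_2)$, so $\rho(w)\ge\lambda_{\min}(\nabla^{2}f(w^*))-o(1)$ on a small ball around $w^*$, which is positive provided the Hessian at $w^*$ is non-degenerate. For the far-field, $\rho$ is continuous on the closed set $\{\nbr{w-w^*}_2\ge r\}$; one would combine the convexity inequality $\inner{w-w^*}{\nabla f(w)}\ge f(w)-f(w^*)$ with the Lipschitz bound $\nbr{\nabla f(w)}_2\le L\nbr{w-w^*}_2$ and the growth of $f$ along rays emanating from $w^*$ to argue that $\inf_{\nbr{w-w^*}_2\ge r}\rho(w)>0$. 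Gluing the two regimes would then produce the conjectured constant $C$.

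The real obstacle is precisely the case that the strongly convex Lemma~\ref{lm:set} avoids, namely when $\nabla^{2}f(w^*)$ is singular. The quadratic example $f(w_1,w_2)=w_1^{2}+c\,w_2^{2}$ shows that the best $C$ must scale like $2\sqrt{c}/(1+c)$, which collapses to zero as $c\downarrow 0$, so any proof has to track an effective curvature at the minimiser. The co-coercivity reduction is also inherently lossy: for $f(w)=\sqrt{1+\nbr{w}_2^{2}}-1$ one checks that $\cos\alpha\equiv 1$ yet $\rho(w)\to 0$, so the bound $\cos\alpha\ge\rho/L$ can be arbitrarily weak and a genuinely geometric argument---bounding the angle between $\nabla f(w)$ and $w-w^*$ directly from the shape of the sublevel sets of $f$---will likely be needed. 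My expectation is that in its stated form the conjecture holds only under an additional hypothesis such as a Polyak--{\L}ojasiewicz or quadratic-growth condition at $w^*$, and that without it the best one can hope for is a weaker local version of the bound that is still strong enough to push the remaining LARS convergence results through.
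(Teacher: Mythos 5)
First, note that the paper does not prove this statement: it is stated explicitly as an open conjecture (the authors write that they ``were not able to prove it directly''), and every subsequent result instead assumes strong convexity, under which the bound $\cos\alpha \ge m/L$ follows in two lines from $\inner{w-w^*}{\nabla f(w)} \ge m\nbr{w-w^*}_2^2$ and $\nbr{\nabla f(w)}_2 \le L\nbr{w-w^*}_2$ (this is exactly inequality~\ref{eq:angle_bound} in the appendix). So there is no proof to compare yours against, and your proposal, as you concede, is not one either: the near-field step requires $\nabla^2 f(w^*)$ to exist and be non-degenerate, which is local strong convexity in disguise, and the far-field claim $\inf_{\nbr{w-w^*}_2\ge r}\rho(w)>0$ is false in general --- your own example $f(w)=\sqrt{1+\nbr{w}_2^2}-1$ has $\rho(w)\to 0$ at infinity --- so the strategy of lower-bounding $\rho(w)=\nbr{\nabla f(w)}_2/\nbr{w-w^*}_2$ cannot close even in principle. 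The Baillon--Haddad reduction itself is correct but, as you observe, lossy.

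Your instinct that the conjecture needs an extra hypothesis is right, and you can sharpen it into a disproof rather than leaving it as a parametric family ($w_1^2+cw_2^2$ only shows the constant degenerates as $c\downarrow 0$; each member of the family still admits some $C>0$). Take $f(x,y)=x^4+y^2$, modified to be quadratic in $x$ outside $\abr{x}\le 1$ so that the gradient is globally Lipschitz; this does not change anything near the unique minimizer $w^*=0$. Along $w=(t,t^2)$ with $t\downarrow 0$ one has $\inner{w-w^*}{\nabla f(w)}=6t^4$ while $\nbr{w-w^*}_2\nbr{\nabla f(w)}_2=2t^3\sqrt{(1+t^2)(1+4t^2)}$, hence $\cos\alpha=3t/\sqrt{(1+t^2)(1+4t^2)}\to 0$, so no uniform $C>0$ exists and Conjecture~\ref{conj} is false as stated. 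The mechanism is exactly the degeneracy you flagged: the displacement is dominated by the flat coordinate while the gradient is dominated by the stiff one, making the two vectors asymptotically orthogonal. The productive move is therefore not to prove the conjecture but to replace it --- e.g.\ by a quadratic-growth or error-bound condition at $w^*$, or by a version of the angle bound required only for $\nbr{w}_2$ large, which is all that the proof of Lemma~\ref{lm:set} actually uses.
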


which, if true, will mean that all the following results hold without the strong convexity assumptions. We conclude this section with the following theorem

\begin{theorem}\label{th:main-conv}
    For any strongly convex function (convex if Conjecture~\ref{conj} is true) with Lipschitz gradient $f(w)$, LARS sequence of updates with fixed learning rate $w_{k+1} = w_k - \eta\frac{\nbr{w_k}_2}{\nbr{\nabla f(w_k)}_2}\nabla f(w_k)$ does not diverge to infinity, i.e. 
    \[ \exists M > 0\ \text{such that}\ \nbr{w_k - w^*}_2 \le M \ \forall k \]
    where $w^* = \argmin_w f(w)$ and $\eta$ is chosen small enough.
\end{theorem}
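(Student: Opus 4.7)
The plan is to leverage Lemma~\ref{lm:set} directly: the set $S$ of ``bad'' points (where LARS moves farther from $w^*$) is bounded, and outside of $S$ the iterates automatically contract toward $w^*$, so the only way $\nbr{w_k - w^*}_2$ can grow is via a step taken from inside $S$, which is itself a bounded region. This means each step can only inflate the distance by a controlled amount, yielding a uniform bound.

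Concretely, I would let $R_k = \nbr{w_k - w^*}_2$ and argue by induction on $k$. Set $M$ as in Lemma~\ref{lm:set}, so that $\sup_{w\in S}\nbr{w}_2 \le M$. For the inductive step, split on whether $w_k \in S$. If $w_k \notin S$, then by definition of $S$ we have $R_{k+1} \le R_k$, so no growth occurs. If instead $w_k \in S$, then $\nbr{w_k}_2 \le M$, which immediately gives $R_k \le M + \nbr{w^*}_2$ and also bounds the step length, since
\[
R_{k+1} \;\le\; \nbr{w_k - w^*}_2 + \eta\,\frac{\nbr{w_k}_2}{\nbr{\nabla f(w_k)}_2}\nbr{\nabla f(w_k)}_2 \;\le\; (M + \nbr{w^*}_2) + \eta M.
\]
Combining both cases, $R_{k+1} \le \max\{R_k,\; M + \nbr{w^*}_2 + \eta M\}$, so by induction
\[
R_k \;\le\; \max\{R_0,\; M + \nbr{w^*}_2 + \eta M\} \;=:\; M' \quad \forall k,
\]
which is the desired bound, with $\eta$ chosen small enough for Lemma~\ref{lm:set} to apply.

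The only genuinely delicate point is verifying that the step magnitude from a point in $S$ is indeed bounded; this is where the specific form of the LARS update is crucial, because the step length equals $\eta \nbr{w_k}_2$ (after the gradient is cancelled by its own norm), and the bound on $\nbr{w_k}_2$ comes directly from membership in $S$. I do not expect any additional obstacle beyond invoking Lemma~\ref{lm:set}, since once $S$ is bounded the rest of the argument is essentially a dichotomy: either the iterate lies in the contracting region (no growth) or in the bounded region (bounded growth), and these two possibilities together preclude escape to infinity. The theorem's conditional phrasing (convex case under Conjecture~\ref{conj}) is automatic, because the above argument uses only the conclusion of Lemma~\ref{lm:set} and not its internal assumptions.
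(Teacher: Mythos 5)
Your argument is correct and follows essentially the same route as the paper's own proof: invoke Lemma~\ref{lm:set} to bound $S$, then use the dichotomy that steps from outside $S$ are non-expansive toward $w^*$ while steps from inside $S$ start at bounded distance and have step length $\eta\nbr{w_k}_2$, hence bounded one-step inflation. Your inductive bookkeeping $R_{k+1} \le \max\cbr{R_k,\ M + \nbr{w^*}_2 + \eta M}$ is in fact a slightly cleaner packaging of the paper's constants $M_1, M_2$.
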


\subsection{Rate of convergence with fixed learning rate}
As we have seen in section~\ref{sec:1d-analysis}, LARS algorithm does not converge if the iterative sequence needs to get close to the origin. Thus, in order to show that the convergence region shrinks with the decrease of the learning rate $\eta$ we will need to assume that such situation never happens. Given this assumption and the result of Theorem~\ref{th:main-conv} we can prove the following

\begin{theorem}\label{th:rate-fixed}
    Suppose that function $f : \R^n \to \R$ is strongly convex (convex if Conjecture 1 is true) and differentiable, and that its gradient is Lipschitz continuous with constant $L > 0$.
    Under the additional assumption that $\exists M_2 > 0 : \min_k\nbr{w_k}_2 \ge M_2$, the LARS sequence of updates with fixed learning rate $\eta$ satisfies the following inequality for each $k$
    \[ \min_k\sbr{f(w_k)} - f(w^*) \le \frac{C_1}{k\eta} + \eta^2 C_2 \]
    with
    \[ C_1 = \frac{LM_1}{2M_2}, C_2 = \frac{L\rbr{\nbr{w^*}_2 + M_1}^2}{2} \]
    with $f(w^*) = \min_w f(w)$ and $M_1$ being the constant from Theorem~\ref{th:main-conv}. 
\end{theorem}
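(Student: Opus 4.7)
The plan is to mimic the standard $L$-Lipschitz-gradient analysis of vanilla gradient descent, but with the LARS-specific step-size factor $\eta\nbr{w_k}_2/\nbr{\nabla f(w_k)}_2$ absorbed using Theorem~\ref{th:main-conv} and the hypothesis $\nbr{w_k}_2 \ge M_2$. First I would write the descent lemma $f(w_{k+1}) \le f(w_k) + \inner{\nabla f(w_k)}{w_{k+1}-w_k} + \tfrac{L}{2}\nbr{w_{k+1}-w_k}_2^2$ and specialize it to the LARS update $w_{k+1}-w_k = -\eta\nbr{w_k}_2\nabla f(w_k)/\nbr{\nabla f(w_k)}_2$. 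The inner product then collapses to $-\eta\nbr{w_k}_2\nbr{\nabla f(w_k)}_2$ and the squared norm to $\eta^2\nbr{w_k}_2^2$, so
\[ f(w_{k+1}) \le f(w_k) - \eta\nbr{w_k}_2\nbr{\nabla f(w_k)}_2 + \tfrac{L\eta^2}{2}\nbr{w_k}_2^2. \]

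Next I would convert the $\nbr{\nabla f(w_k)}_2$ factor into a function-value gap via the standard convexity + Cauchy--Schwarz chain: writing $\delta_k := f(w_k) - f(w^*)$, convexity gives $\delta_k \le \inner{\nabla f(w_k)}{w_k - w^*} \le \nbr{\nabla f(w_k)}_2\nbr{w_k-w^*}_2$, so $\nbr{\nabla f(w_k)}_2 \ge \delta_k/\nbr{w_k-w^*}_2 \ge \delta_k/M_1$ by Theorem~\ref{th:main-conv}. Applying $\nbr{w_k}_2 \ge M_2$ in the decrease term and $\nbr{w_k}_2 \le \nbr{w^*}_2 + M_1$ (triangle inequality with Theorem~\ref{th:main-conv}) in the quadratic error term yields the scalar recursion
\[ \delta_{k+1} \le \delta_k - \tfrac{\eta M_2}{M_1}\delta_k + \tfrac{L\eta^2(\nbr{w^*}_2+M_1)^2}{2}. \]

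Then I would telescope. Rearranging to $\tfrac{\eta M_2}{M_1}\delta_k \le \delta_k - \delta_{k+1} + \tfrac{L\eta^2(\nbr{w^*}_2+M_1)^2}{2}$ and summing for $k = 0,\ldots,K-1$ collapses the differences into $\delta_0 - \delta_K \le \delta_0$; dividing by $K$ and using $\min_{k<K}\delta_k \le \tfrac{1}{K}\sum_k \delta_k$, together with the a priori bound $\delta_0 \le \tfrac{L}{2}\nbr{w_0-w^*}_2^2 \le \tfrac{LM_1^2}{2}$ coming again from $L$-smoothness, then produces a bound of the advertised shape $C_1/(K\eta) + C_2 \eta^2$.

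The main obstacle, and the part I would spend the most care on, is shepherding the constants so that they match the precise values $C_1 = LM_1/(2M_2)$ and $C_2 = L(\nbr{w^*}_2+M_1)^2/2$ in the theorem statement; the naive telescoping just sketched produces extra $M_1/M_2$ factors in the $\eta^2$ term, suggesting that a tighter accounting, perhaps inserting the co-coercivity inequality $\nbr{\nabla f(w_k)}_2^2 \le 2L\delta_k$ before applying Cauchy--Schwarz, or a more delicate handling of $\delta_0$, is needed to reach the stated form. A secondary subtlety worth stressing in the writeup is that the hypothesis $\min_k\nbr{w_k}_2 \ge M_2$ is not merely technical: without it the LARS step shrinks to zero near the origin (cf.~Section~\ref{why_implementation_is_sgd_sometimes}), so the rate must be read as a conditional guarantee valid only while the iterates remain bounded away from $0$.
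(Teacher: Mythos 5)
Your opening steps (the descent lemma specialized to the LARS update, together with the bounds $\nbr{\nabla f(w_k)}_2 \le LM_1$ and $\nbr{w_k}_2 \le \nbr{w^*}_2 + M_1$ obtained from Theorem~\ref{th:main-conv}) coincide with the paper's. The gap is in how you turn the decrease term $-\eta\nbr{w_k}_2\nbr{\nabla f(w_k)}_2$ into a statement about $\delta_k = f(w_k)-f(w^*)$. Lower-bounding $\nbr{\nabla f(w_k)}_2 \ge \delta_k/M_1$ gives the recursion $\delta_{k+1} \le (1-\eta M_2/M_1)\delta_k + \tfrac{L\eta^2}{2}(\nbr{w^*}_2+M_1)^2$, and when you rearrange and telescope you must divide through by the coefficient $\eta M_2/M_1$; this turns the $\eta^2$ remainder into an $O(\eta)$ floor, namely $\tfrac{LM_1(\nbr{w^*}_2+M_1)^2}{2M_2}\,\eta$. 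That is not merely ``extra $M_1/M_2$ factors'' --- it is a lost power of $\eta$, and it would degrade Corollary~\ref{cor} from $O(1/\epsilon^{1.5})$ to $O(1/\epsilon^{2})$ (optimal $\eta \sim \epsilon$ instead of $\sqrt{\epsilon}$). The co-coercivity inequality you float as a repair, $\nbr{\nabla f(w_k)}_2^2 \le 2L\delta_k$, points the wrong way: in the decrease term you need a lower bound on the gradient norm, not an upper bound.

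The paper avoids the division entirely. It uses convexity in the form $f(w_k) - f(w^*) \le \inner{\nabla f(w_k)}{w_k - w^*}$ and completes the square against $\nbr{w_{k+1}-w^*}_2^2$, which yields
\[ \inner{\nabla f(w_k)}{w_k - w^*} = \frac{\nbr{\nabla f(w_k)}_2}{2\eta\nbr{w_k}_2}\rbr{\nbr{w_k-w^*}_2^2 - \nbr{w_{k+1}-w^*}_2^2} + \frac{\eta}{2}\nbr{w_k}_2\nbr{\nabla f(w_k)}_2, \]
the last term being absorbed by half of the descent-lemma decrease. The coefficient of the telescoping difference is bounded by $\tfrac{LM_1}{2\eta M_2}$ using exactly your two estimates, while the quadratic term $\tfrac{L\eta^2}{2}\nbr{w_k}_2^2 \le \eta^2 C_2$ survives untouched; summing and averaging then gives the stated $\tfrac{C_1}{k\eta}+\eta^2 C_2$. (As an aside, your linear recursion, iterated directly rather than telescoped, gives geometric convergence to the $O(\eta)$ neighbourhood --- a genuinely different and not uninteresting statement, but not the theorem as claimed. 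Your closing remark on the role of $\min_k\nbr{w_k}_2 \ge M_2$ is apt and consistent with the paper's discussion.)
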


Notice that the first term in the bound goes to zero, which means that LARS iterates converge to the region around the optimum with the size shrinking with the decrease of $\eta$. Thus, making $\eta$ small enough and keeping track of the minimal value of $f(w_k)$ we can achieve $\epsilon$-convergence which is summarized in the following

\begin{corollary}\label{cor}
    Under the conditions of Theorem~\ref{th:rate-fixed}, LARS has at least a sublinear $\epsilon$-convergence rate of $O\rbr{\frac{1}{\epsilon^{1.5}}}$, when $\eta = \sqrt{\frac{\epsilon}{3C_2}}$.
\end{corollary}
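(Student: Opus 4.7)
The plan is to start directly from the bound in Theorem~\ref{th:rate-fixed}, namely
\[ \min_k\sbr{f(w_k)} - f(w^*) \le \frac{C_1}{k\eta} + \eta^2 C_2, \]
and split the target accuracy $\epsilon$ across the two terms on the right-hand side. The first term decays with the iteration count $k$, while the second is a fixed ``noise floor'' determined entirely by $\eta$. The whole argument is therefore a short balancing of $\eta$ against $k$.

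First I would substitute the prescribed value $\eta = \sqrt{\epsilon/(3C_2)}$, which by construction pins the second term at exactly $\eta^2 C_2 = \epsilon/3$. This reduces the problem to finding how large $k$ must be so that the remaining term satisfies $C_1/(k\eta) \le 2\epsilon/3$. Solving this inequality for $k$ and resubstituting the chosen $\eta$ gives
\[ k \ge \frac{3C_1}{2\epsilon\eta} = \frac{3C_1}{2\epsilon}\sqrt{\frac{3C_2}{\epsilon}} = \frac{3\sqrt{3}\,C_1\sqrt{C_2}}{2}\cdot\frac{1}{\epsilon^{3/2}}. \]
Combining the two pieces yields $\min_k f(w_k) - f(w^*) \le \epsilon/3 + 2\epsilon/3 = \epsilon$ as soon as $k$ exceeds this threshold, and since the prefactor depends only on the problem data (via $C_1$ and $C_2$, hence on $L$, $M_1$, $M_2$, and $\nbr{w^*}_2$) and not on $\epsilon$, the claimed $O(1/\epsilon^{1.5})$ rate follows.

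There is no genuinely hard step here: once Theorem~\ref{th:rate-fixed} is granted, the corollary reduces to the algebraic balancing above, and the specific split $\epsilon/3$ versus $2\epsilon/3$ is dictated exactly by the stated choice of $\eta$. The only subtlety I would flag in the final write-up is compatibility with the ``$\eta$ small enough'' hypothesis inherited from Theorems~\ref{th:main-conv} and~\ref{th:rate-fixed}. That hypothesis translates into an upper bound $\eta \le \eta_{\max}$, and hence into an upper bound $\epsilon \le 3 C_2 \eta_{\max}^2$ on the accuracy regime in which the rate is valid; this restricts the corollary to sufficiently small $\epsilon$ but does not affect the asymptotic complexity.
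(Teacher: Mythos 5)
Your proposal is correct and follows essentially the same route as the paper: both plug the bound from Theorem~\ref{th:rate-fixed} into the requirement $\frac{C_1}{k\eta} + \eta^2 C_2 < \epsilon$ and arrive at the identical threshold $k > \frac{3\sqrt{3C_2}\,C_1}{2\epsilon^{1.5}}$ (the paper derives $\eta = \sqrt{\epsilon/(3C_2)}$ by minimizing $\eta(\epsilon - \eta^2 C_2)$, whereas you take the stated $\eta$ as given and split $\epsilon$ into $\epsilon/3 + 2\epsilon/3$, which is the same computation). Your closing remark about compatibility with the ``$\eta$ small enough'' hypothesis is a valid caveat the paper leaves implicit.
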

\begin{proof}
The result follows directly by substituting $\eta$ and $\epsilon$ in the bound:
\begin{align*} 
    \min_k\sbr{f(w_k)} - f(w^*) \le \frac{C_1}{k\eta} + \eta^2 C_2 < \epsilon \Rightarrow k > \frac{C_1}{\eta\rbr{\epsilon - \eta^2C_2}}
\end{align*}
Minimizing the right side under the constraint $\eta^2 C_2 < \epsilon$ we obtain the optimal $\eta = \sqrt{\frac{\epsilon}{3C_2}}$. Then, to reach $\epsilon$-convergence $k$ should be
\[ k > \frac{C_1}{\eta\rbr{\epsilon - \eta^2C_2}} = \frac{3\sqrt{3C_2}C_1}{2\epsilon\sqrt{\epsilon}} = O\rbr{\frac{1}{\epsilon^{1.5}}} \]

\end{proof}

\subsection{Convergence with decaying learning rate}
Finally, let's establish the convergence when $\eta \equiv \eta_k \to 0$. The following result holds
\begin{theorem}\label{th:rate-decay}
    Under the assumptions of Theorem~\ref{th:rate-fixed}, LARS converges to the optimal value with any decaying learning rate sequence $\eta_k$ satisfying 
    \[ \sum_{k=1}^{\infty}\eta_k = \infty, \sum_{k=1}^{\infty}\eta_k^2 < \infty \]
\end{theorem}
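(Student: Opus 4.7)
The plan is to adapt the single-step descent estimate that underlies Theorem~\ref{th:rate-fixed}, keep $\eta_k$ variable rather than constant, and then telescope. Concretely, I would start from the identity
\[ \nbr{w_{k+1} - w^*}_2^2 = \nbr{w_k - w^*}_2^2 - 2\eta_k\frac{\nbr{w_k}_2}{\nbr{\nabla f(w_k)}_2}\inner{\nabla f(w_k)}{w_k - w^*} + \eta_k^2\nbr{w_k}_2^2 \]
and invoke convexity in the form $\inner{\nabla f(w_k)}{w_k - w^*} \ge f(w_k) - f(w^*)$.

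Next, I would reuse exactly the constants that appear in Theorem~\ref{th:rate-fixed}. The uniform iterate bound $\nbr{w_k - w^*}_2 \le M_1$ from Theorem~\ref{th:main-conv} gives $\nbr{w_k}_2 \le \nbr{w^*}_2 + M_1$ and, combined with Lipschitz continuity of the gradient and $\nabla f(w^*) = 0$, the bound $\nbr{\nabla f(w_k)}_2 \le LM_1$. Together with the standing hypothesis $\nbr{w_k}_2 \ge M_2$, these produce the uniform lower bound $\nbr{w_k}_2/\nbr{\nabla f(w_k)}_2 \ge M_2/(LM_1)$, so substituting into the identity above yields
\[ 2\eta_k\frac{M_2}{LM_1}\rbr{f(w_k) - f(w^*)} \le \nbr{w_k - w^*}_2^2 - \nbr{w_{k+1} - w^*}_2^2 + \eta_k^2\rbr{\nbr{w^*}_2 + M_1}^2. \]

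Summing from $k = 1$ to $K$ then causes the distance terms to telescope, leaving only $\nbr{w_1 - w^*}_2^2$ and $\rbr{\nbr{w^*}_2 + M_1}^2\sum_{k=1}^K \eta_k^2$ on the right. Letting $K \to \infty$ and using $\sum_k \eta_k^2 < \infty$, I would conclude that
\[ \sum_{k=1}^{\infty}\eta_k\rbr{f(w_k) - f(w^*)} < \infty. \]
Because every summand is nonnegative and $\sum_k \eta_k = \infty$, a standard contradiction argument forces $\liminf_k \rbr{f(w_k) - f(w^*)} = 0$, which combined with monotonicity of the running minimum delivers $\min_{j \le k} f(w_j) \to f(w^*)$; this is the sense in which LARS is already claimed to converge in Theorem~\ref{th:rate-fixed}.

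The main obstacle is the same one that has shaped the whole analysis: keeping the normalized direction $\nbr{w_k}_2\nabla f(w_k)/\nbr{\nabla f(w_k)}_2$ under control. Theorem~\ref{th:main-conv} supplies the upper bound on $\nbr{w_k}_2$ and the hypothesis $\nbr{w_k}_2 \ge M_2$ prevents the ratio from degenerating near the origin; without both inputs the telescoping collapses, since the coefficient of $f(w_k) - f(w^*)$ and the $\eta_k^2\nbr{w_k}_2^2$ remainder would no longer be uniformly controllable. A secondary subtlety is that the argument only delivers convergence of the running minimum, mirroring the oscillatory behavior already seen in Section~\ref{sec:1d-analysis}; upgrading to $f(w_k) \to f(w^*)$ would likely need a finer Lyapunov-style argument that the two-sum conditions on $\eta_k$ alone do not seem to provide.
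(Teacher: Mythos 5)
Your proposal is correct and follows essentially the same route as the paper's proof: the same expansion of $\nbr{w_{k+1}-w^*}_2^2$, the same use of convexity, and the same uniform bounds $\nbr{w_k}_2/\nbr{\nabla f(w_k)}_2 \ge M_2/(LM_1)$ and $\nbr{w_k}_2 \le \nbr{w^*}_2 + M_1$ before telescoping. The only cosmetic difference is that the paper factors $\min_i f(w_i) - f(w^*)$ out of the weighted sum and divides, whereas you conclude summability of $\sum_k \eta_k\rbr{f(w_k)-f(w^*)}$ and invoke the divergence of $\sum_k\eta_k$; these are equivalent and both yield convergence of the running minimum.
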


Even though this result establishes the convergence under decaying learning rate schedule, it cannot be used to improve the convergence rate established in Corollary~\ref{cor}. With the current analysis technique LARS with diminishing learning rates can only be shown to converge with $O\rbr{\frac{1}{\epsilon^2}}$ rate, which is worse than what is established in the previous section. However, the analysis can likely be improved, since, for example, we ignore the fact that $\nabla f(w_k)$ converges to zero, which can significantly tighten the bounds.

\begin{figure}[t]
    \centering
    \begin{subfigure}{0.4\textwidth}
        \centering
        \includegraphics[width=1.0\textwidth]{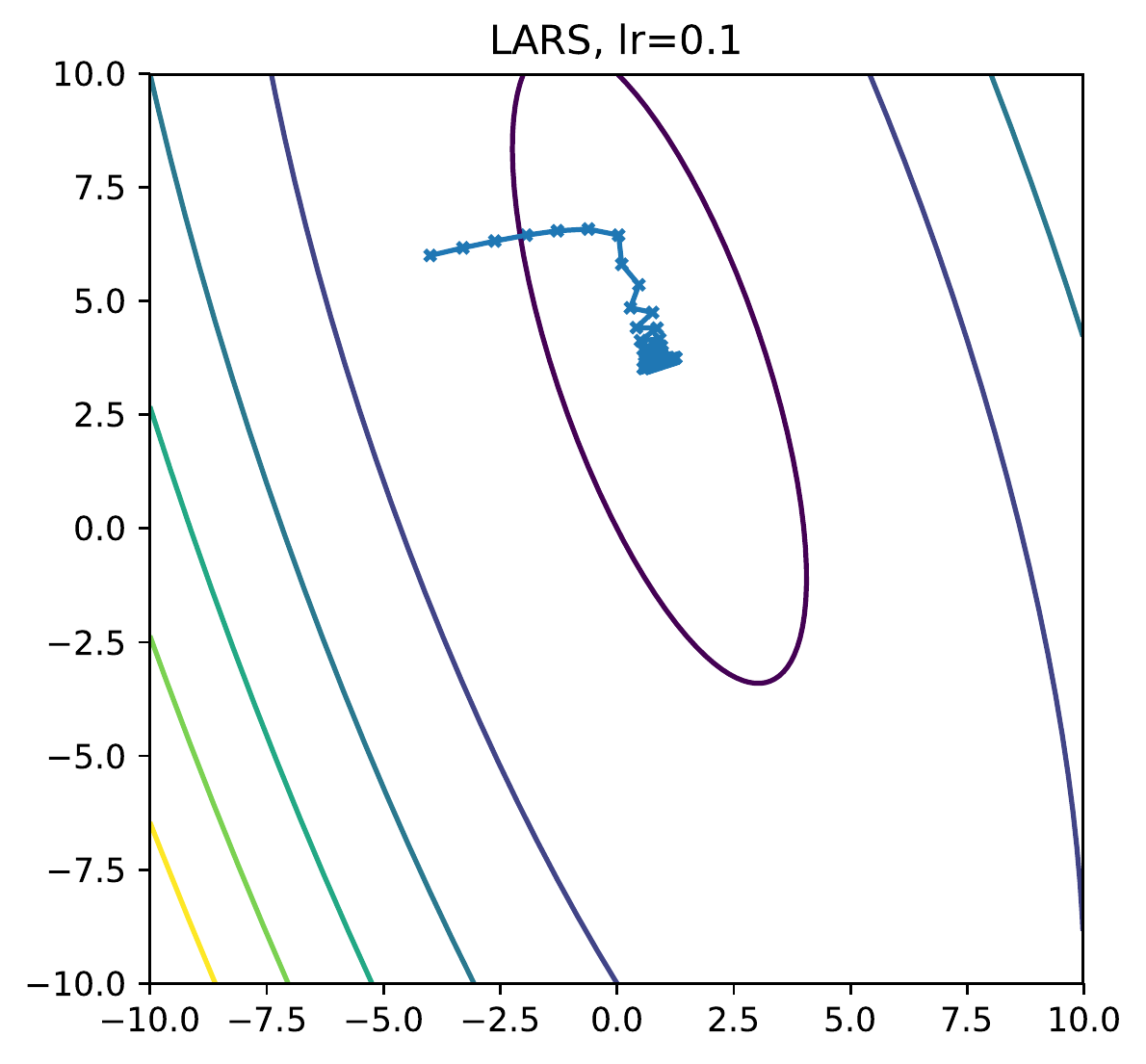}
        \caption{}
    \end{subfigure}
    \begin{subfigure}{0.4\textwidth}
        \centering
        \includegraphics[width=1.0\textwidth]{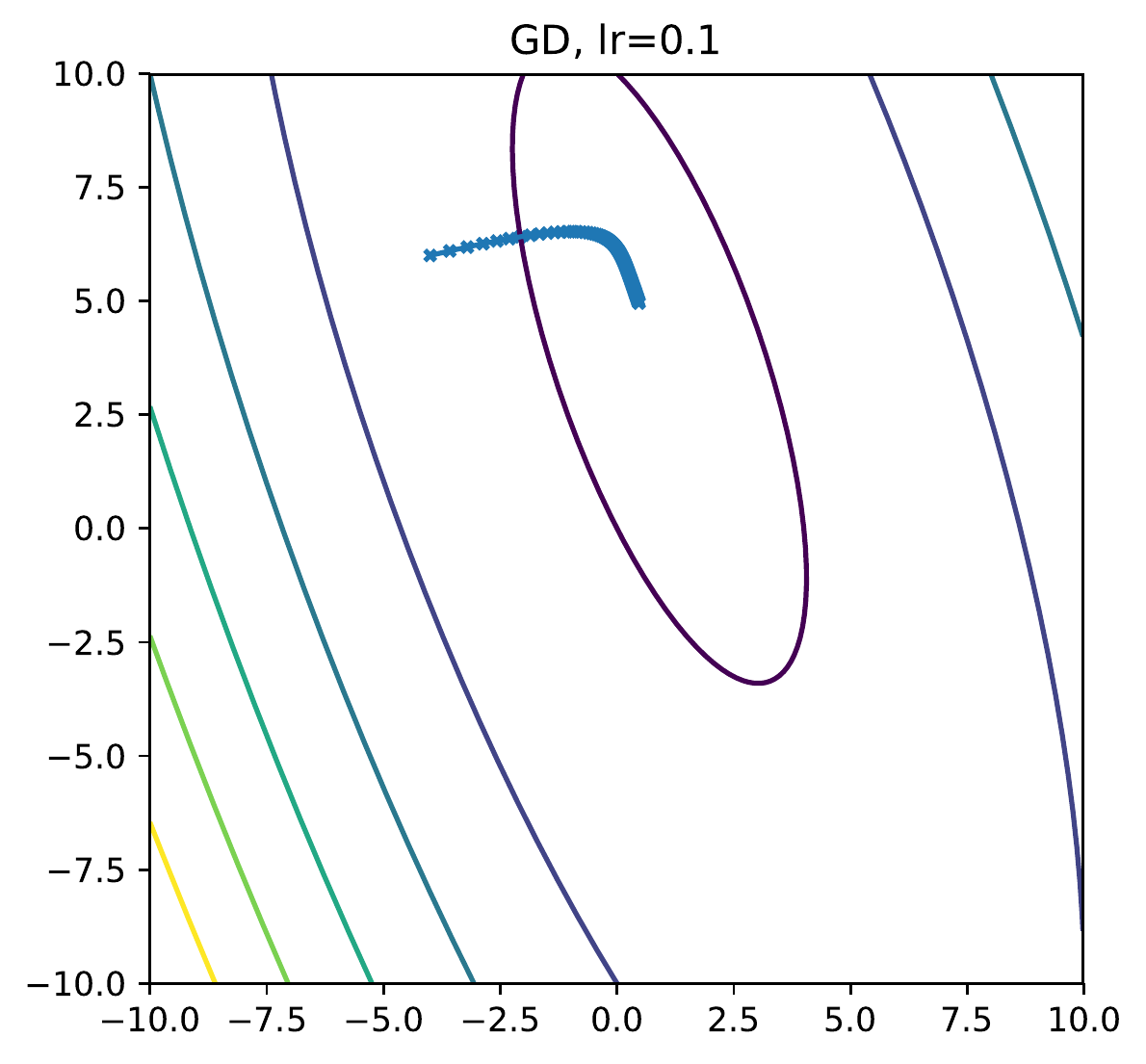}
        \caption{}
    \end{subfigure}
    \begin{subfigure}{0.4\textwidth}
        \centering
        \includegraphics[width=1.0\textwidth]{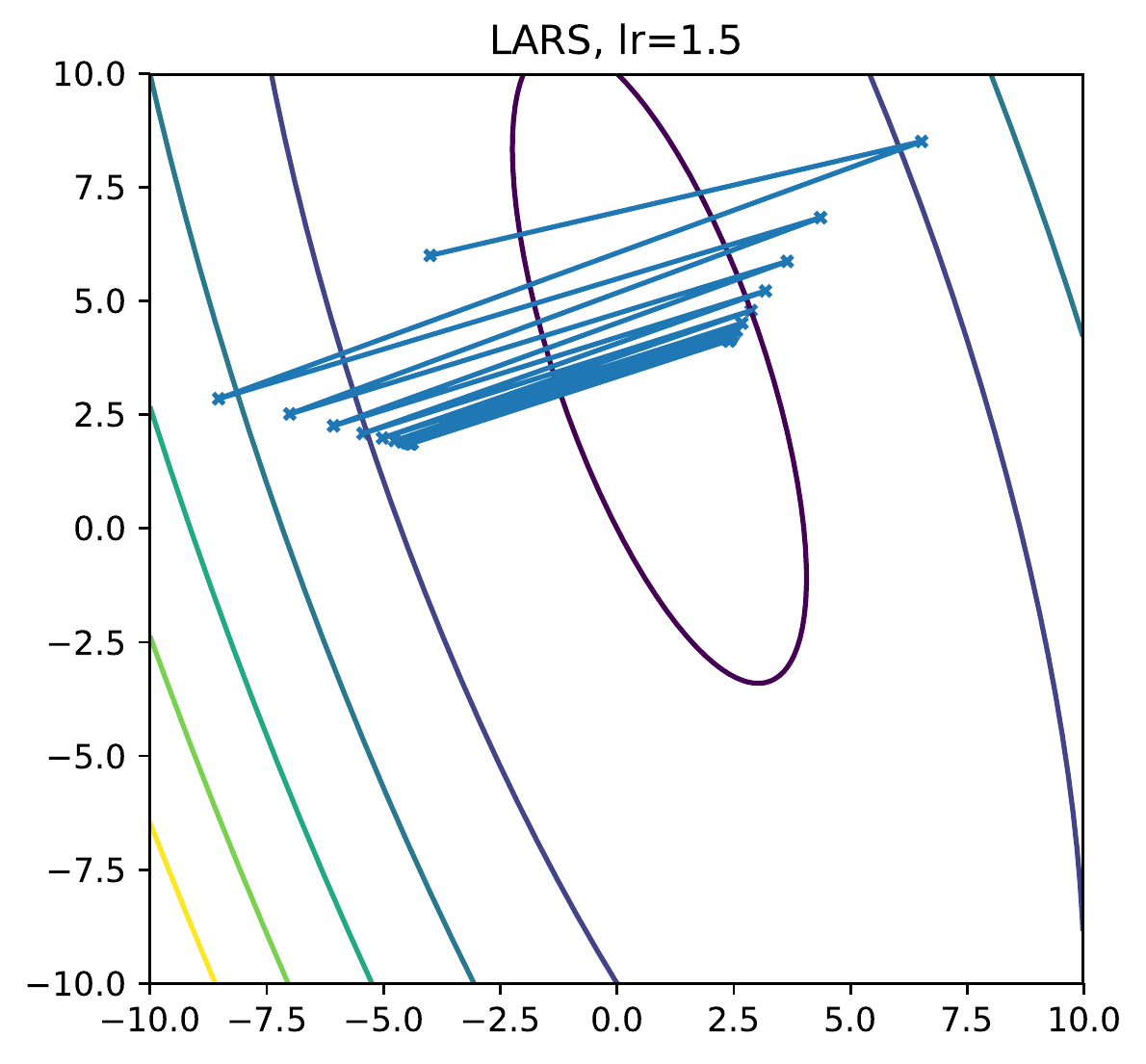}
        \caption{}
    \end{subfigure}
    \begin{subfigure}{0.4\textwidth}
        \centering
        \includegraphics[width=1.0\textwidth]{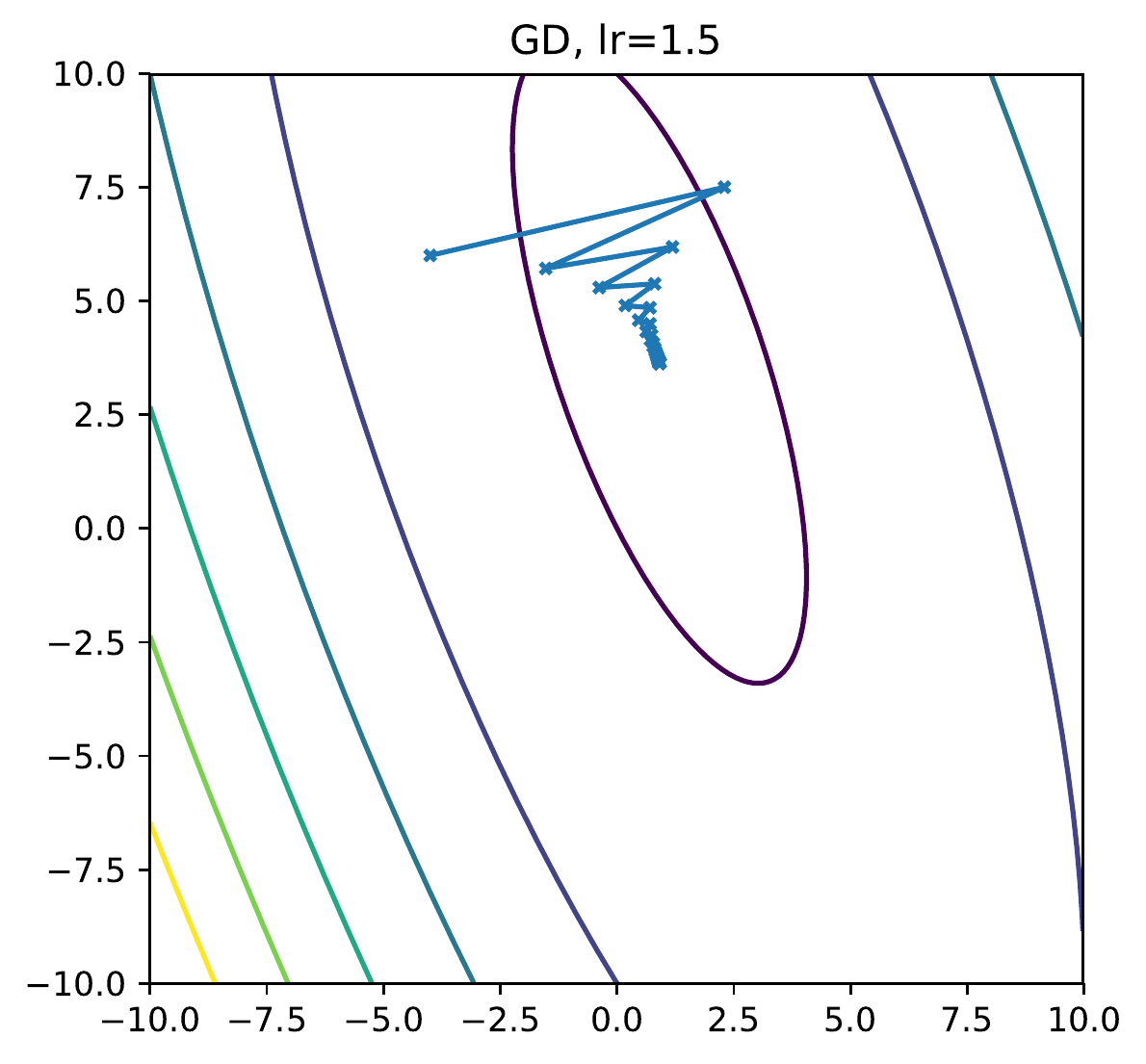}
        \caption{}
    \end{subfigure}
    \caption{Simple experiments comparing LARS with fixed learning rate and Gradient Descent (GD) with fixed learning rate. Plots (a) and (b) show that LARS can converge faster than GD, since each update is approximately the same size. However, it only converges to a certain region around the optimal point. Plots (c) and (d) show that this convergence region increases with the increase of learning rate. It is also clear that it is nor symmetric around the optimal point which is consistent with the analysis presented in section~\ref{sec:1d-analysis}.}
    \label{fig:2dexps}
\end{figure}

\section{Discussion}
Let's summarize the obtained results. We have shown that the LARS algorithm with fixed learning rate converges to some area around the optimal value that can be made arbitrarily small by decreasing the learning rate. To provide an empirical justification for this claim we conducted a simple 2-dimensional experiment with a quadratic function. The results of comparing LARS and gradient descent are presented in Figure~\ref{fig:2dexps}. To guarantee an $\epsilon$-convergence, the learning rate needs to be decreased so that the convergence region becomes $\epsilon$ small. Following this strategy we were able to prove that LARS converges with at least the rate of $O\rbr{\frac{1}{\epsilon^{1.5}}}$, which is worse than the convergence rate of the usual gradient descent under the same conditions. However, the given convergence rate might be improved, since we have shown that at least for 1-dimensional functions, LARS enjoys faster convergence rate of $O\rbr{\frac{1}{\epsilon}}$. One way to do that would be to find tighter inequalities than the ones used in Theorems~\ref{th:rate-fixed},~\ref{th:rate-decay}. 

Another potential advantage of LARS over the gradient descent is that it might be possible to show the convergence under less strict assumptions, since in 1-D case the algorithm will perform exactly the same iterates for any convex differentiable function with the same position of the minimum. Thus, likely, the Lipschitz gradient assumption in the aforementioned analysis can be relaxed and that might imply that LARS should perform better than the gradient descent on the ill-conditioned problems.

We confirm the aforementioned claims by experimentally showing that LARS performs comparable to the other optimization algorithms on three simple benchmarks.


\section{Experiments}

\subsection{Implementation Details}

We have implemented LARS in TensorFlow using the standard optimizer extension. Specifically, we reuse the built-in \texttt{MomentumOptimizer} and apply the LARS-specific logic using the \texttt{opt.apply\_gradients()} method. In TensorFlow, the weights and biases of a network are represented as TensorFlow variables. In LARS, each TensorFlow variable is given its own \texttt{MomentumOptimizer} so each momentum is maintained independently, while still only computing the gradients of each variable once per update.

We use two additional tweaks, not mentioned in the LARS Algorithm 1 ~\cite{you2017large}. First, we introduced a small constant epsilon term in the denominator of the local learning rate calculation, which improves numerical stability when close to the optimal value. Second, if the norm of the variable is close to zero (less than 0.01), then we switch to SGD, as discussed in Section \ref{why_implementation_is_sgd_sometimes}.

\subsection{Convex Problems}

In order to experimentally examine the theoretical results that we derived in previous sections, we run full-batch versions of LARS, GD and ADAM on two convex problems - SVM \cite{svm} and logistic regression. We also compare these to the stochastic (mini-batch) variants of the algorithms, as these are more common in practice.

\subsubsection{Support Vector Machine}

\begin{figure}
    \centering

    \begin{subfigure}{0.45\textwidth}
        \centering
        \includegraphics[width=1.0\textwidth]{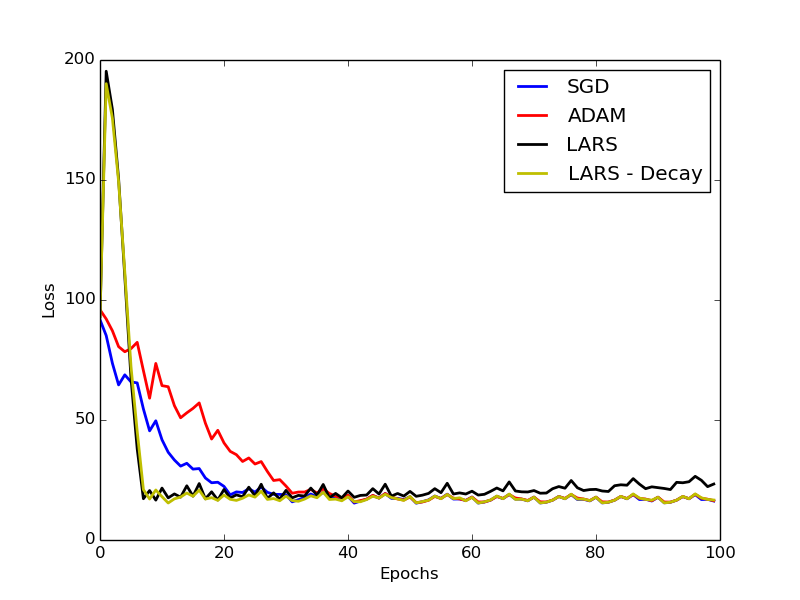}
        \caption{SVM - Mini-Batch}
    \end{subfigure}
    \begin{subfigure}{0.45\textwidth}
        \centering
        \includegraphics[width=1.0\textwidth]{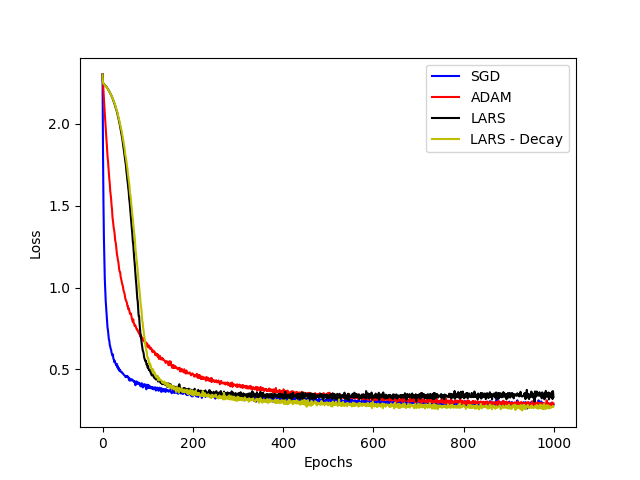}
        \caption{Logistic Regression - Mini-Batch}
    \end{subfigure}
    \begin{subfigure}{0.45\textwidth}
        \centering
        \includegraphics[width=1.0\textwidth]{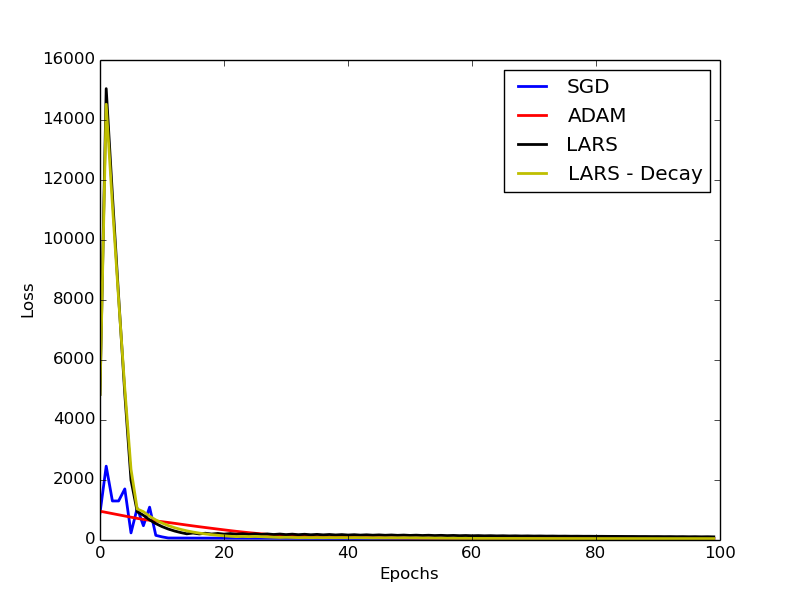}
        \caption{SVM - Full Batch}
    \end{subfigure}
    \begin{subfigure}{0.45\textwidth}
        \centering
        \includegraphics[width=1.0\textwidth]{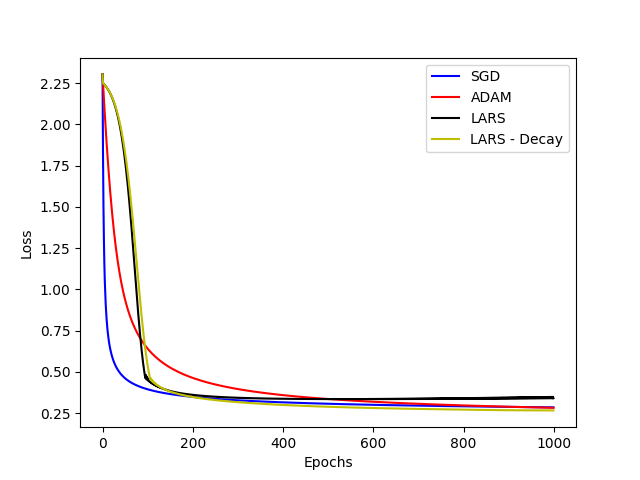}
        \caption{Logistic Regression - Full Batch}
    \end{subfigure}
  \caption{Comparison of the convergence of SGD, ADAM and LARS on two convex problems. ``LARS - Decay'' refers to LARS with a decaying learning rate schedule. Note than 1 epoch indicates training conducted on a single batch.}
  \label{fig:convex}
\end{figure}


This problem involves simple, linearly separable data that is tackled using a linear SVM. The SVM hinge loss is minimized using Tensorflow's GradientDescentOptimizer and AdamOptimizer, along with our custom Tensorflow implementation of LARS. We run experiments on both a mini-batch and full-batch version of all 3 optimizers. Note that for the mini-batch version, batches of size 100 (for a dataset of 1000 training point) are used. Some tuning is done to decide on the learning rates for the three algorithms, with SGD using 0.01, ADAM using 0.1 and LARS using 0.1. The graph comparing convergence of the three algorithms is shown on the left side of Fig. \ref{fig:convex}.
\\
\\
Note that while all three algorithms converge to the same final optimum (which is to be expected given the convexity of the problem), LARS tends to converge much quicker than the other two. The initial spike in the loss is likely due to the initialized weights having a relatively high norm. The convergence pattern is similar across full-batch and mini-batch versions, with the former having lower variance as is to be expected. This also indicates that the analysis for full-batch as well as for stochastic variants of LARS can be improved to match the convergence rates of the standard optimization methods.
\\
\\
Also note that LARS appears to oscillate near the optimum towards the end of training. As described earlier, this is due to the fact that LARS can only guarantee convergence to some area around the optimum, dependant on the learning rate. If we use a decaying learning rate schedule that starts at 0.1 and goes to 0 at the end of training, we find that LARS convergences to exactly the same optimum as SGD and ADAM, as indicated by the yellow line in Fig. \ref{fig:convex}.

\subsubsection{Logistic Regression}

The second convex problem that we experimented with was logistic regression, trained to classify MNIST digits. Both full-batch and mini-batch versions are tested again. For mini-batch versions, we use a large batch size of 10,000 (the training data size is 60,000). We tuned the learning rates for the three algorithms, with SGD, ADAM and LARS using 0.5, 0.001 and 0.05 respectively. The results are shown on the right side of Fig. \ref{fig:convex}. We see that LARS appears to outperform ADAM in terms of epochs needed to reach the optimum, but it is slower to converge than SGD. Once again all three converge to the same final optimum.
\\
\\
Similar to the SVM case, we see that LARS with a fixed learning rate appears to be deviating away from the optimum towards the end of training. This is again fixed by using a decaying learning rate schedule. These examples illustrate that decaying the learning rate plays more of a vital role for the performance of LARS than for traditional optimizers like SGD and ADAM.

\subsection{Non-Convex Problem}
TensorFlow includes a ResNet v2 model~\cite{he2016identity} that trains on the CIFAR-10 dataset\footnote{https://github.com/tensorflow/models/tree/master/official/resnet}.The only change we made to the TensorFlow code was to swap the optimizer. Each ResNet model was trained on a dedicated NVIDIA TITAN X (Pascal) GPU with 12 GB of memory. All three optimizers achieved similar average training throughput. A single train step took on average 44.8 milliseconds for SGD,  42.6 milliseconds for ADAM, and 48.0 milliseconds for LARS. Both SGD and LARS performed similarly, with LARS achieving a slightly better performance (1.3329\% Train Error for LARS, and  1.4584\% Train Error for SGD). ADAM initially performed best for the first 3000 train steps, but then fell behind both SGD and LARS after that. The loss function for each optimizer saw similar trends.

\begin{figure}[H]
    \centering
    \begin{subfigure}{0.48\textwidth}
        \centering
        \includegraphics[width=1.0\textwidth]{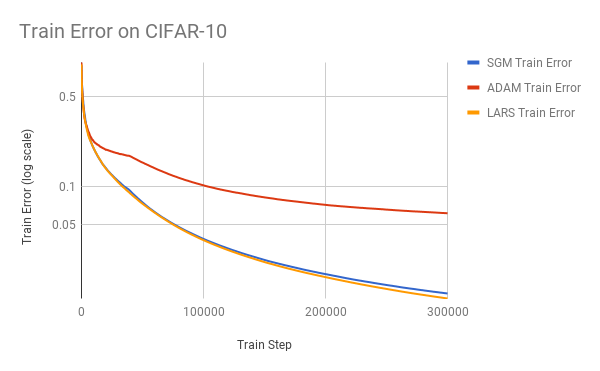}
        \caption{Train Error (log scale).}
    \end{subfigure}
    \begin{subfigure}{0.48\textwidth}
        \centering
        \includegraphics[width=1.0\textwidth]{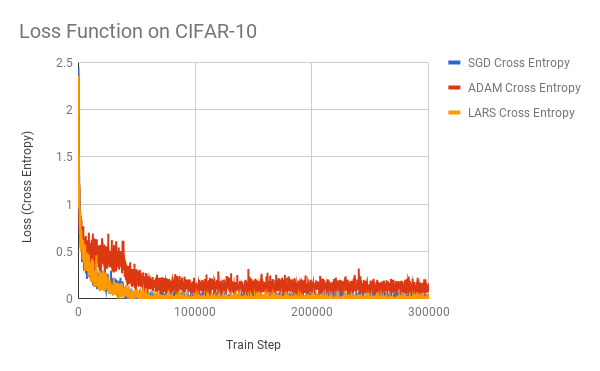}
        \caption{Train Loss (Cross Entropy).}
    \end{subfigure}
  \caption{Train error and loss function comparison of SGD, ADAM and LARS for ResNet v2 model on the CIFAR-10 dataset, with tuned initial learning rates of 0.1, 0.005, and 0.001, respectively. All optimizers use a batch size of 128.}
  \label{cifar10}
\end{figure}

\section{Conclusion}
We proved new theoretical results for gradient decent algorithms with proportional updates. Specifically, we showed that LARS has at least a sublinear convergence rate of $O \left( \frac{1}{\epsilon^{1.5}} \right)$. We have also demonstrated that LARS is guaranteed to converge under the standard decaying learning rate schedule assumptions. Even though, our theoretical results indicate that LARS might converge slower than other methods, empirically we observe the reverse effect for both stochastic and full-batch cases. Thus, our theoretical derivations can likely be improved by finding tighter bounds or relaxing certain assumptions. In the future, we hope to accomplish this, as well as analyze the other variants of LARS involving stochastic updates and momentum updates.

We would like to thank Dr. Aarti Singh and Dr. Pradeep Ravikumar for teaching Convex Optimization. Finally, we would like to thank Yang You and Boris Ginsburg from NVIDIA for their initial work on LARS.

\bibliography{main}
\bibliographystyle{abbrv}

\newpage
\section*{Appendix}
Proofs of the theoretical results.

\textbf{Lemma~\ref{lm:set}}.\textit{
    For any strongly convex function $f(w)$ with Lipschitz gradient, the following set 
    \[ S = \cbr{w: \nbr{w - \eta\frac{\nbr{w}_2}{\nbr{\nabla f(w)}_2}\nabla f(w) - w^*}_2 > \nbr{w - w^*}_2} \]
    is bounded, i.e. $\exists M < \infty: \sup_{w \in S}\nbr{w}_2 \le M$, providing $\eta$ is small enough, $w^* = \argmin_w f(w)$
}

\begin{proof}
    Let's consider a point $w \notin S, w \neq w^*$:
    \begin{align*}
        &\nbr{w - \eta\frac{\nbr{w}_2}{\nbr{\nabla f(w)}_2}\nabla f(w) - w^*}_2 \le \nbr{w - w^*}_2 \Leftrightarrow \numberthis \label{eq:set_rev} \\ \Leftrightarrow
        &\nbr{w - \eta\frac{\nbr{w}_2}{\nbr{\nabla f(w)}_2}\nabla f(w) - w^*}^2_2 \le \nbr{w - w^*}^2_2 \Leftrightarrow \\ \Leftrightarrow\ 
        &\eta^2\nbr{w}_2^2 - 2\eta\frac{\nbr{w}_2}{\nbr{\nabla f(w)}_2}\inner{w - w^*}{\nabla f(w)} \le 0 \Leftrightarrow \\ \Leftrightarrow
        &\eta\nbr{w}_2 - 2\nbr{w - w^*}_2\frac{\inner{w - w^*}{\nabla f(w)}}{\nbr{w - w^*}_2\nbr{\nabla f(w)}_2} \le 0 \Leftrightarrow \\ \Leftrightarrow
        &\frac{\nbr{w}_2}{\nbr{w - w^*}_2} \le \frac{2}{\eta}\frac{\inner{w - w^*}{\nabla f(w)}}{\nbr{w - w^*}_2\nbr{\nabla f(w)}_2} \equiv \frac{2}{\eta}\cos{\alpha} \numberthis \label{eq:set_bound}
    \end{align*}
    where $\alpha$ is the angle between $w - w^*$ and $\nabla f(w)$. Notice that when $f(w)$ is strongly convex with constant $m$ and has Lipschitz gradient with constant $L$, we can derive a lower bound on $\cos{\alpha}$:
    \begin{align*}
        m\nbr{w - w^*}_2^2 &\le \inner{w - w^*}{\nabla f(w) - \nabla f(w^*)} = \inner{w - w^*}{\nabla f(w)} \Rightarrow \\ \Rightarrow 
        \cos{\alpha} &= \frac{\inner{w - w^*}{\nabla f(w)}}{\nbr{w - w^*}_2\nbr{\nabla f(w)}_2} \ge m\frac{\nbr{w - w^*}_2}{\nbr{\nabla f(w) - \nabla f(w^*)}_2} \ge \frac{m}{L} \numberthis \label{eq:angle_bound}
    \end{align*}
    Now, notice that left-hand side of~\ref{eq:set_bound} converges to $1$ when $\nbr{w}_2 \to \infty$. Thus, for any sequence $w_i$, such that $\nbr{w_i}_2 \to \infty$
    \[ \forall \epsilon\ \exists M(\epsilon) : \abr{\frac{\nbr{w}_2}{\nbr{w - w^*}_2} - 1} < \epsilon\ \forall w: \nbr{w}_2 > M(\epsilon)\]
    Finally, let's choose $\epsilon = 1, M(1) \equiv M_1, \eta \le \frac{2m}{L(1 + \epsilon)} = \frac{m}{L}$, then
    \[
        \frac{\nbr{w}_2}{\nbr{w - w^*}_2} < 1 + \epsilon = 2 \le 
        \frac{2}{\eta}\frac{m}{L} \le \frac{2}{\eta}\cos{\alpha}\ \forall w: \nbr{w}_2 > M_1
    \]
    So, we just proved that for all sufficiently big $w$, inequality~\ref{eq:set_rev} holds, which means that 
    \[ \nbr{w}_2 > M_1 \Rightarrow w \notin S \Rightarrow \sup_{w\in S}\nbr{w}_2 \le M_1 < \infty \]
\end{proof}

\textbf{Theorem~\ref{th:main-conv}}.{ \textit{
    For any strongly convex function (convex if Conjecture~\ref{conj} is true) with Lipschitz gradient $f(w)$, LARS sequence of updates with fixed learning rate $w_{k+1} = w_k - \eta\frac{\nbr{w_k}_2}{\nbr{\nabla f(w_k)}_2}\nabla f(w_k)$ does not diverge to infinity, i.e. 
    \[ \exists M > 0\ \text{such that}\ \nbr{w_k - w^*}_2 \le M \ \forall k \]
    where $w^* = \argmin_w f(w)$ and $\eta$ is chosen small enough.
}}

\begin{proof}
In this proof we will use Lemma~\ref{lm:set} with constant bounding set $S$ denoted with $M_0$. Let's notice that for any 
\begin{align}\label{eq:1}
    w \in S \Rightarrow \nbr{w - w^*}_2 \le \nbr{w}_2 + \nbr{w^*}_2 \le M_0 + \nbr{w^*}_2 \equiv M_1
\end{align}
Moreover, one step deviation from set $S$ is also bounded, i.e. for any $w^+ \equiv w - \eta\frac{\nbr{w}_2}{\nbr{\nabla f(w)}_2}\nabla f(w)$
\begin{align}\label{eq:2}
    \sup_{w \in S}\nbr{w^+}_2 = \sup_{w \in S}\nbr{w - \eta\frac{\nbr{w}_2}{\nbr{\nabla f(w)}_2}\nabla f(w)}_2 \le  
    \sup_{w \in S}(1 + \eta)\nbr{w}_2 = (1 + \eta)M_1 \equiv M_2
\end{align}
Finally, notice that by definition of set $S$ we have
\begin{align}\label{eq:3}
    w \notin S \Rightarrow \nbr{w^+ - w^*}_2 \le \nbr{w - w^*}_2
\end{align}
Combining statements~\ref{eq:1},~\ref{eq:2} and~\ref{eq:3} we can now show that
\[ w_k \in S \Rightarrow \nbr{w_t - w^*}_2 \le \max\cbr{M_1, M_2} = M_2\ \forall t \ge k  \]
Indeed, if $w_{k+1} \in S \Rightarrow \nbr{w_{k+1} - w^*}_2 \le M_1$. If $w_{k_1} \notin S \Rightarrow  \nbr{w_{k+1} - w^*}_2 \le M_2$. And after that $w_{k+2}$ can only be closer to optimum than $w_{k+1}$ because~\ref{eq:3} is true. Thus, the distance will only decrease for all next steps until $w_t \in S$ again.

To conclude the proof notice that if $w_0 \in S \Rightarrow \nbr{w_k - w^*}_2 \le M_2$. If $w_0 \notin S$ than using~\ref{eq:3} again we see that all next steps will be closer to optimum, until $w_k \in S$. Thus, we have shown that
\[ \nbr{w_k - w^*}_2 \le \max\cbr{M_2, \nbr{w_0 - w^*}_2} \equiv M \ \forall k  \]
which concludes the proof.
\end{proof}

\textbf{Theorem~\ref{th:rate-fixed}}.\textit{
    Suppose that function $f : \R^n \to \R$ is strongly convex (convex if Conjecture 1 is true) and differentiable, and that its gradient is Lipschitz continuous with constant $L > 0$.
    Under the additional assumption that $\exists M_2 > 0 : \min_k\nbr{w_k}_2 \ge M_2$, the LARS sequence of updates with fixed learning rate $\eta$ satisfies the following inequality for each $k$
    \[ \min_k\sbr{f(w_k)} - f(w^*) \le \frac{C_1}{k\eta} + \eta^2 C_2 \]
    with
    \[ C_1 = \frac{LM_1}{2M_2}, C_2 = \frac{L\rbr{\nbr{w^*}_2 + M_1}^2}{2} \]
    with $f(w^*) = \min_w f(w)$ and $M_1$ being the constant from Theorem~\ref{th:main-conv}. }
\begin{proof}

From Lipschitz continuity follows that 
\[ \nbr{\nabla f(w_k)}_2 = \nbr{\nabla f(w_k) - \nabla f(w^*)}_2 \le L\nbr{w_k - w^*}_2 \le LM_1\ \forall k \] 
Moreover, from the reverse triangle inequality we have 
\[ \nbr{w_k}_2 - \nbr{w^*}_2 \le \nbr{w_k - w^*}_2 \Rightarrow \nbr{w_k}_2 \le \nbr{w^*}_2 + M_1 \equiv M_3 \] 
Now, let's denote $f(w_k) \equiv f_k, \nabla f(w_k) \equiv g_k, f(w^*) \equiv f^*$ then
\begin{align*}
    f_{k+1} &\le f_k + \inner{g_k}{w_{k+1} - w_k} + \frac{L}{2}\nbr{w_{k+1} - w_k}_2^2 = f_k - \eta\frac{\nbr{w_k}_2}{\nbr{g_k}_2}\nbr{g_k}_2^2 + \frac{\nbr{w_k}^2_2}{\nbr{g_k}^2_2}\frac{\eta^2 L}{2}\nbr{g_k}_2^2 \le \\
    &\le f^* + \inner{g_k}{w_k - w^*} - \eta\nbr{w_k}_2\nbr{g_k}_2 + \frac{\eta^2 L}{2}\nbr{w_k}_2^2 = \\ &=
    f^* + \frac{1}{2\eta}\frac{\nbr{g_k}_2}{\nbr{w_k}_2}\rbr{2\eta \frac{\nbr{w_k}_2}{\nbr{g_k}_2}\inner{g_k}{w_k - w^*} - \eta^2\nbr{w_k}_2^2 + \nbr{w_k - w^*}_2^2 - \nbr{w_k - w^*}_2^2} - \\ &\hspace{25px} - \frac{\eta}{2}\nbr{w_k}_2\nbr{g_k}_2 + \frac{\eta^2 L}{2}\nbr{w_k}_2^2 = \\
    &= f^* + \frac{1}{2\eta}\frac{\nbr{g_k}_2}{\nbr{w_k}_2}\rbr{\nbr{w_k - w^*}^2_2 - \nbr{w_{k+1} - w^*}_2^2} - \frac{\eta}{2}\nbr{w_k}_2\nbr{g_k}_2 + \frac{\eta^2 L}{2}\nbr{w_k}_2^2 \le \\ &\le 
    f^* + \frac{LM_1}{2\eta M_2}\rbr{\nbr{w_k - w^*}^2_2 - \nbr{w_{k+1} - w^*}_2^2} + \frac{\eta^2 M_3^2 L}{2}
\end{align*}

Summing up inequalities for each $k$ we achieve the final result:
\begin{align*} 
    \min_i\sbr{f_{i}} - f^* &\le \frac{1}{k}\sum_{i=1}^k\rbr{f_i - f^*} \le \frac{1}{k}\frac{LM_1}{2\eta M_2}\rbr{\nbr{w_0 - w^*}^2_2 - \nbr{w_{k+1} - w^*}_2^2} + \frac{\eta^2 M_3^2 L}{2} \le \\ &\le \frac{1}{k\eta}\frac{LM_1}{2M_2}\nbr{w_0 - w^*}^2_2 + \eta^2 \frac{L\rbr{\nbr{w^*}_2 + M_1}^2}{2}
\end{align*}

\end{proof}

\textbf{Theorem~\ref{th:rate-decay}}.\textit{
    Under the assumptions of Theorem~\ref{th:rate-fixed}, LARS converges to the optimal value with any decaying learning rate sequence $\eta_k$ satisfying 
    \[ \sum_{k=1}^{\infty}\eta_k = \infty, \sum_{k=1}^{\infty}\eta_k^2 < \infty \]
}
\begin{proof}

We will use the same notation as used in Theorem~\ref{th:rate-fixed}.

\begin{align*}
    \nbr{w_{k+1} - w^*}_2^2 &= \nbr{w_k - w^* - \eta_k \frac{\nbr{w_k}_2}{\nbr{g_k}_2}g_k}_2^2 = \\ &= \nbr{w_k - w^*}_2^2 - 2\eta_k\frac{\nbr{w_k}_2}{\nbr{g_k}_2}\inner{w_k - w^*}{g_k} + \eta_k^2\nbr{w_k}_2^2 \le \\ &\le \nbr{w_k - w^*}_2^2 - 2\eta_k\frac{\nbr{w_k}_2}{\nbr{g_k}_2}\rbr{f_k - f^*} + \eta_k^2\nbr{w_k}_2^2
\end{align*}
where in the last inequality we used the definition of convexity:
\[ f^* \ge f_k + \inner{g_k}{w^* - w_k} \Leftrightarrow -\inner{g_k}{w_k - w^*} \le -\rbr{f_k - f^*} \] 

Applying this inequality recursively we get
\begin{align*}
    0 \le \nbr{w_{k+1} - w^*}_2^2 &\le \nbr{w_1 - w^*}_2^2 - \sum_{i=1}^k\sbr{2\eta_i \frac{\nbr{w_i}_2}{\nbr{g_i}_2}\rbr{f_i - f^*}} + \sum_{i=1}^k\sbr{\eta_i^2\nbr{w_i}_2^2} \Rightarrow \\ \Rightarrow \rbr{\min_i\sbr{f_i} - f^*}&\sum_{i=1}^k\sbr{2\eta_i \frac{\nbr{w_i}_2}{\nbr{g_i}_2}} \le \nbr{w_1 - w^*}_2^2 + \sum_{i=1}^k\sbr{\eta_i^2\nbr{w_i}_2^2} \Rightarrow \\ \Rightarrow
    \min_i\sbr{f_i} - f^* &\le \frac{\nbr{w_1 - w^*}_2^2 + \sum_{i=1}^k\sbr{\eta_i^2\nbr{w_i}_2^2}}{2\sum_{i=1}^k\sbr{\eta_i \frac{\nbr{w_i}_2}{\nbr{g_i}_2}}} \le \frac{M_1^2 + M_3^2\sum_{i=1}^k\eta_i^2}{2\frac{M_2}{M_1L}\sum_{i=1}^k\eta_i}\numberthis \label{eq:decay-bound}
\end{align*}

Noticing that the numerator converges to a constant value as $k \to \infty$ and denominator grows without bound we conclude the proof.


\end{proof}

\end{document}